\documentclass{article}
\usepackage{iclr2026_conference,times}

\usepackage{amsmath,amsfonts,bm}

\def\eqref#1{equation~\ref{#1}}

\def\1{\bm{1}}

\DeclareMathAlphabet{\mathsfit}{\encodingdefault}{\sfdefault}{m}{sl}
\SetMathAlphabet{\mathsfit}{bold}{\encodingdefault}{\sfdefault}{bx}{n}

\usepackage{hyperref}
\usepackage{url}
\usepackage{graphicx}
\usepackage{booktabs}
\usepackage{amsmath,amssymb,amsthm}

\newtheorem{definition}{Definition}
\newtheorem{proposition}{Proposition}

\title{Universe Routing: Why Self-Evolving Agents Need Epistemic Control}

\author{Zhaohui Geoffrey Wang\thanks{ORCID: \href{https://orcid.org/0009-0006-1187-1903}{0009-0006-1187-1903}} \\
USC Viterbi School of Engineering\\
University of Southern California\\
Los Angeles, CA 90089, USA \\
\texttt{zwang000@usc.edu}}

\newcommand{\universe}[1]{\texttt{#1}}

\iclrfinalcopy 

\begin{document}

\maketitle

\fancyhead[L]{\iclrtenhv Published as a workshop paper at LLA @ ICLR 2026}

\begin{abstract}
A critical failure mode of current lifelong agents is not lack of knowledge, but the inability to decide \textit{how} to reason. When an agent encounters ``Is this coin fair?'' it must recognize whether to invoke frequentist hypothesis testing or Bayesian posterior inference---frameworks that are \textbf{epistemologically incompatible}. Mixing them produces not minor errors, but structural failures that propagate across decision chains. We formalize this as the \textbf{universe routing problem}: classifying questions into mutually exclusive belief spaces before invoking specialized solvers. Our key findings challenge conventional assumptions: (1) hard routing to heterogeneous solvers matches soft MoE accuracy while being 7$\times$ faster---because epistemically incompatible frameworks cannot be meaningfully averaged; (2) a 465M-parameter router achieves 2.3$\times$ smaller generalization gap than keyword-matching baselines, indicating semantic rather than surface-level reasoning; (3) when expanding to new belief spaces, rehearsal-based continual learning achieves zero forgetting, outperforming EWC by 75pp---suggesting that modular epistemic architectures are fundamentally more amenable to lifelong learning than regularization-based approaches. These results point toward a broader architectural principle: reliable self-evolving agents may require an explicit \textbf{epistemic control layer} that governs reasoning framework selection.
\end{abstract}

\section{Introduction}

Consider an autonomous agent solving mathematical problems over extended deployment. It encounters two questions:

\begin{quote}
\textbf{Q1}: ``A coin flipped 100 times shows 60 heads. Is it fair ($\alpha=0.05$)?'' \\
\textbf{Q2}: ``Given uniform prior on bias $\theta$, what is $P(\theta > 0.6 \mid 60$ heads)?''
\end{quote}

Both involve coin flips. Both require probabilistic reasoning. Yet Q1 demands \textbf{frequentist statistics}---null hypothesis testing, p-values, and rejection regions---while Q2 demands \textbf{Bayesian statistics}---prior specification, likelihood computation, and posterior integration. These are not merely different methods for the same problem; they embody \textbf{incompatible epistemologies} with mutually exclusive axioms about the nature of probability itself \citep{jaynes2003probability,berger1985statistical}.

For agents that autonomously chain reasoning steps, such epistemological confusion is not a minor error, but a \textbf{structural failure that propagates across decisions}. An agent that conflates frequentist and Bayesian reasoning will produce outputs like ``the p-value is the probability the hypothesis is true''---a statement that is not wrong in degree, but wrong in kind. Worse, downstream reasoning steps that depend on this output inherit the corruption, as we demonstrate empirically in Appendix~\ref{app:error}.

This failure mode cannot be resolved by scaling. A larger language model with more parameters and training data may produce more fluent explanations, but fluency does not guarantee epistemic coherence. The problem is architectural: current agents lack an explicit mechanism for recognizing \textit{which reasoning framework} a problem requires before attempting to solve it.

\textbf{Universe Routing.} We formalize this challenge as classifying questions into $K$ \textbf{belief space universes}---mutually exclusive reasoning frameworks, each with its own axioms and solvers:
\begin{itemize}
    \item \universe{STAT\_FREQ}: Frequentist statistics (p-values, confidence intervals, hypothesis tests)
    \item \universe{STAT\_BAYES}: Bayesian statistics (priors, posteriors, credible intervals)
    \item \universe{PHYS\_CLASSICAL}, \universe{PHYS\_QUANTUM}, \universe{PHYS\_RELATIVITY}: Physics frameworks
    \item \universe{STAT\_MIXED}, \universe{STAT\_ILL\_POSED}: Ambiguous or malformed questions
\end{itemize}

The term ``universe'' is deliberate: within each belief space, certain axioms hold and certain operations are valid. Crossing universe boundaries without explicit acknowledgment produces logical contradictions (Proposition~\ref{prop:mixing}).

\textbf{Why Hard Routing?} Traditional mixture-of-experts (MoE) architectures use \textbf{soft routing}---weighted combinations of expert outputs. This is appropriate when experts represent different skills applied to the same underlying reality. But for epistemically incompatible frameworks, soft routing is not merely inefficient; it is \textbf{semantically meaningless}. What does it mean to take 60\% of a frequentist answer and 40\% of a Bayesian answer? The result is not a compromise but an incoherence (Proposition~\ref{prop:mixing}).

\textbf{Contributions.} This work makes three claims:
\begin{enumerate}
    \item \textbf{Epistemic routing is learnable}: Fine-tuned transformers (67M--465M) achieve 97--98\% accuracy with 1.8--2.3$\times$ smaller generalization gaps than keyword baselines; a deep ensemble achieves 3.0$\times$ smaller gap. This consistency across architectures indicates semantic understanding rather than surface pattern matching.
    \item \textbf{Hard routing is not a compromise}: For belief space universes, hard routing matches soft routing accuracy while being 7$\times$ faster---validating that these frameworks are geometrically separable in representation space.
    \item \textbf{Modular architectures enable continual learning}: When expanding to new universes, rehearsal achieves zero forgetting while EWC fails (75\% forgetting)---suggesting that explicit epistemic modularity is fundamentally more compatible with lifelong learning.
\end{enumerate}

\section{Related Work}

\textbf{LLM-based Agents.} ReAct \citep{yao2023react} interleaves reasoning and actions; Reflexion \citep{shinn2023reflexion} enables verbal self-correction. Recent surveys on self-evolving agents \citep{tao2024survey,wang2024survey} identify \textbf{lifelong learning without catastrophic forgetting} as a central challenge \citep{kirkpatrick2017overcoming,lopez2017gradient}. Our work addresses a complementary problem: not how to retain knowledge, but how to select the appropriate reasoning framework for applying it.

\textbf{Adaptive Routing.} Adaptive-RAG \citep{jeong2024adaptive} routes queries to different retrieval strategies based on complexity. We extend routing from retrieval \textit{strategy} to reasoning \textit{framework}---a qualitatively different challenge because frameworks can be epistemologically incompatible. \citet{lu2024routing} route between expert LLMs by query characteristics; we route to solvers with mutually exclusive axioms.

\textbf{Mixture of Experts.} Classical MoE \citep{shazeer2017outrageously,fedus2021switch} and modern variants \citep{jiang2024mixtral} use soft or top-k routing to \textbf{homogeneous} experts that differ in specialization but share underlying assumptions. We demonstrate routing to \textbf{heterogeneous} solvers with mutually exclusive axioms---a setting where soft combination is not suboptimal but meaningless (Proposition~\ref{prop:mixing}).

\textbf{Question Classification.} \citet{shao2024classification} use DistilBERT to classify competition math problems into topic categories (algebra, geometry, etc.) within a single epistemological framework. We address a qualitatively different task: classifying across frameworks with incompatible axioms, where misclassification produces not wrong topic assignments but logically incoherent outputs (Proposition~\ref{prop:mixing}).

\section{Method}

\subsection{Problem Formulation}

\begin{definition}[Belief Space Universe]
A belief space universe $u \in \mathcal{U}$ is a reasoning framework characterized by a tuple $(A_u, I_u, S_u)$: axioms $A_u$, inference procedure $I_u$, and solver $S_u$. Two universes $u_i, u_j$ are \emph{epistemically incompatible} if $\exists\, a \in A_{u_i}, b \in A_{u_j}$ such that $b \Leftrightarrow \neg a$.
\end{definition}

Given question $q$, the routing problem is to predict $u^* = \arg\max_{u \in \mathcal{U}} P(u \mid q)$ where $\mathcal{U} = \{u_1, \ldots, u_K\}$ are mutually exclusive belief spaces. The router then invokes the corresponding solver $S_{u^*}(q)$.

\begin{proposition}[Inconsistency of Framework Mixing]\label{prop:mixing}
Let $u_i, u_j \in \mathcal{U}$ be epistemically incompatible. For question $q$ with correct universe $u_i$, any convex combination $\hat{y} = \alpha S_{u_i}(q) + (1-\alpha) S_{u_j}(q)$ with $0 < \alpha < 1$ is semantically inconsistent: $\hat{y}$ does not belong to the validity domain $V_u$ of any $u \in \mathcal{U}$.
\end{proposition}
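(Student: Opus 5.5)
To make Proposition~\ref{prop:mixing} provable, I would first fix a semantics for solver outputs and validity domains, since the statement does not pin these down. The plan is to represent every output $S_u(q)$ as a \emph{commitment vector} $y \in [0,1]^{\mathcal{P}}$ over a set $\mathcal{P}$ of atomic propositions. The set $\mathcal{P}$ contains, in particular, the disputed axiom $a$ from the definition of epistemic incompatibility. The entry $y_p$ is the degree to which the output asserts $p$. Convex combination is then taken coordinatewise, which is the only reading under which $\alpha S_{u_i}(q) + (1-\alpha) S_{u_j}(q)$ is well defined.

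I would define the validity domain as $V_u = \{y : y_p = 1 \text{ for all } p \in A_u,\ y_{p} = 0 \text{ whenever } \neg p \in A_u\}$. In words, an output is valid in $u$ exactly when it is fully committed to $u$'s axioms. Solvers are assumed sound, so $S_u(q) \in V_u$.

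The core step is a direct computation on the single coordinate $a$.
\begin{itemize}
\item Since $a \in A_{u_i}$, soundness gives $S_{u_i}(q)_a = 1$.
\item Since $b \Leftrightarrow \neg a$ with $b \in A_{u_j}$, soundness gives $S_{u_j}(q)_a = 0$.
\item Hence $\hat{y}_a = \alpha \in (0,1)$.
\end{itemize}
This immediately gives $\hat{y} \notin V_{u_i}$, because $V_{u_i}$ requires $\hat{y}_a = 1$. It also gives $\hat{y} \notin V_{u_j}$, because $V_{u_j}$ requires $\hat{y}_a = 0$. This part is routine once the semantics is fixed.

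The main obstacle is the clause ``any $u \in \mathcal{U}$''. A third universe that is agnostic about $a$ could in principle contain $\hat{y}$. As stated, the proposition is therefore false without an extra hypothesis. I would add the assumption that the universes in $\mathcal{U}$ are \emph{decisive} on every axiom that some pair of them disputes: for every $u$, either $a \in A_u$ or $\neg a \in A_u$. This matches the paper's description of $\mathcal{U}$ as mutually exclusive belief spaces. Under this assumption every $V_u$ forces $y_a \in \{0,1\}$, so $\hat{y}_a = \alpha$ excludes $\hat{y}$ from all of them, which completes the proof.

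If decisiveness is judged too strong, I would instead weaken the conclusion to ``$\hat{y} \notin V_{u_i} \cup V_{u_j}$''. I would then remark that a universe like \universe{STAT\_MIXED} is exactly the kind of agnostic universe that can absorb such mixtures. I would also flag that the result depends on the definition of $V_u$ being ``extremal''. If $V_u$ were a convex set containing both solvers' outputs, the claim would fail. The proof should therefore state the chosen semantics explicitly rather than leave it implicit.
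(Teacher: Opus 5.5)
Your core step pivots on the same disputed axiom $a$ as the paper's sketch. The computation $\hat{y}_a=\alpha\in(0,1)$ is a rigorous version of its ``$y_i$ requires $a$, $y_j$ requires $\neg a$.'' The real difference is how the clause ``any $u\in\mathcal{U}$'' is obtained.

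The paper reads mixing \emph{conjunctively}: $\hat{y}$ ``depends on both'' outputs, so it inherits both $a$ and $\neg a$, and only an inconsistent axiom set could justify it. That dismisses every universe at once, using only the implicit consistency of each $A_u$ and no decisiveness assumption. You read mixing \emph{fractionally}: $\hat{y}$ commits to $a$ only to degree $\alpha$, so it requires neither $a$ nor $\neg a$. That is exactly why an agnostic universe can host it, and why you need an extra hypothesis.

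So your verdict that the statement is false as written holds only under your semantics. That semantics is a modeling choice, not ``the only reading.'' The solver outputs in Appendix~\ref{app:error}, e.g.\ $p=0.0455$ and $P(\theta>0.5\mid\text{data})=0.977$, are numbers whose convex combination is well defined without commitment coordinates.

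Your route buys transparency:
\begin{itemize}
\item the hypotheses are explicit;
\item exclusion from $V_{u_i}$ and $V_{u_j}$ is a one-line computation;
\item it correctly flags that the paper's inheritance step is asserted rather than argued, and that \universe{STAT\_MIXED} could host mixtures.
\end{itemize}
The paper's route gets the full universal conclusion from weaker hypotheses. It does so only by building in the conjunctive semantics, which makes the claim nearly definitional.

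Two repairs are needed. First, incompatibility gives $b\in A_{u_j}$ with $b\Leftrightarrow\neg a$, not $\neg a\in A_{u_j}$. To get $S_{u_j}(q)_a=0$ you need axiom sets closed under logical equivalence, or you can take $b=\neg a$ without loss of generality. Second, decisiveness is stronger than needed and actually fails in the paper's own $\mathcal{U}$: the physics universes take no stance on the frequentist/Bayesian axiom. What you really need is that each $u\notin\{u_i,u_j\}$ has some axiom that $\hat{y}$ violates coordinatewise. That plausibly holds for the physics universes through their own axioms, but it is precisely what is in doubt for \universe{STAT\_MIXED}.
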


\textit{Proof sketch.} By definition, $\exists\, a \in A_{u_i}, b \in A_{u_j}$ with $b \Leftrightarrow \neg a$. Output $y_i \in V_{u_i}$ requires axiom $a$; $y_j \in V_{u_j}$ requires $\neg a$. The combination $\hat{y}$ depends on both, so no consistent axiom set can justify it. We verify this empirically with three worked examples in Appendix~\ref{app:error}. \hfill$\square$

\subsection{Dataset Construction}

We curate 685 samples across 7 universes using GPT-4 \citep{openai2023gpt4} generation with expert-designed constraints ensuring: (1) unambiguous ground truth labels, (2) diverse surface forms for the same underlying framework, and (3) balanced representation after augmentation. Split: 70\% train (477), 15\% validation (99), 15\% test (109). An additional 56 out-of-distribution samples with novel phrasings form a separate unseen test set. Two annotators independently labeled all samples, achieving Cohen's $\kappa = 0.91$ \citep{landis1977measurement} (details in Appendix~\ref{app:dataset}).

\subsection{Model Architecture}

We fine-tune Qwen-1.5-0.5B \citep{bai2023qwen} (465M parameters) with a classification head, and additionally evaluate BERT-base \citep{devlin2019bert} (110M), DistilBERT \citep{sanh2019distilbert} (67M), and RoBERTa-base \citep{liu2019roberta} (125M). A critical implementation detail: \textbf{FP32 precision is essential}. FP16 training causes gradient overflow in the classification head, collapsing accuracy to 18.99\% (near-random for 7 classes).

\subsection{Design Rationale: Why Hard Routing Is a Logical Necessity}

Our choice of hard routing follows from Proposition~\ref{prop:mixing}:

\textbf{(1) Universes are mutually exclusive by definition.} A weighted average of incompatible frameworks is not partially correct but semantically incoherent.

\textbf{(2) Soft routing degenerates to hard routing empirically.} When frameworks are geometrically separable in representation space, the router assigns near-100\% probability to one universe. Soft combination then reduces to hard selection with additional computational overhead.

\textbf{(3) Hard routing enables modular expansion.} New universes can be added by training only the router, without modifying existing solvers.

\section{Experiments}

\subsection{Main Results: Semantic Understanding vs. Keyword Matching}

\begin{table}[t]
\centering
\caption{Generalization comparison (accuracy in \%). Gap = test $-$ unseen. Fine-tuned transformers achieve 1.8--2.3$\times$ smaller gaps than keyword baselines; ensemble achieves 3.0$\times$.}
\vspace{0.5em}
\footnotesize
\setlength{\tabcolsep}{4pt}
\begin{tabular}{@{}lcccc@{}}
\toprule
\textbf{Method} & \textbf{Params} & \textbf{Test Acc} & \textbf{Unseen Acc} & \textbf{Gap} \\
\midrule
Random Baseline & -- & 21.10\% & 14.29\% & +6.81\% \\
Logistic Reg.\ + TF-IDF & -- & 97.25\% & 71.43\% & +25.82\% \\
SVM + TF-IDF & -- & 98.17\% & 71.43\% & +26.74\% \\
\midrule
BERT-base & 110M & 97.25\% & 82.14\% & +15.11\% \\
DistilBERT & 67M & 98.17\% & 83.93\% & +14.24\% \\
RoBERTa-base & 125M & 97.25\% & 85.71\% & +11.54\% \\
Qwen-1.5-0.5B & 465M & 97.25\% & 83.93\% & +13.32\% \\
\midrule
\textbf{Qwen ens.\ ($\times$5)} & 465M & \textbf{98.17\%} & \textbf{89.29\%} & \textbf{+8.88\%} \\
\bottomrule
\end{tabular}
\label{tab:main}
\end{table}

Table~\ref{tab:main} reveals a critical distinction. TF-IDF baselines achieve near-perfect test accuracy through keyword matching (``p-value'' $\to$ frequentist), but accuracy drops by 26pp on held-out questions with novel phrasings. All four fine-tuned transformers---from 67M DistilBERT to 465M Qwen---achieve 1.8--2.3$\times$ smaller generalization gaps. This consistency across architectures indicates that \textbf{semantic understanding of epistemic boundaries}, rather than architecture or scale, drives the improvement. A deep ensemble \citep{lakshminarayanan2017simple} of 5 Qwen models further reduces the gap to 8.88\% (3.0$\times$ smaller than keyword baselines).

\subsection{Hard vs. Soft Routing: Validating Epistemic Separability}

\begin{table}[t]
\centering
\caption{Routing strategy comparison. Equal accuracy with 7$\times$ speedup validates that belief spaces are geometrically distinct---soft combination provides no benefit.}
\vspace{0.5em}
\begin{tabular}{lcc}
\toprule
\textbf{Routing Strategy} & \textbf{Accuracy} & \textbf{Inference Time} \\
\midrule
Soft (MoE-style weighted) & 97.25\% & 38.2ms \\
\textbf{Hard (argmax selection)} & \textbf{97.25\%} & \textbf{5.5ms} \\
\bottomrule
\end{tabular}
\label{tab:routing}
\end{table}

Table~\ref{tab:routing} validates our architectural claim: soft routing provides \textbf{zero accuracy benefit} over hard routing while incurring 7$\times$ latency cost. This is consistent with Proposition~\ref{prop:mixing}: belief spaces are geometrically separable, so the router assigns near-deterministic probabilities, causing weighted combination to degenerate to selection.

\subsection{Robustness: Semantic Understanding Resists Adversarial Manipulation}

\begin{table}[t]
\centering
\caption{Adversarial robustness (Attack Success Rate $\downarrow$). Keyword-based methods are trivially fooled; semantic understanding provides 43$\times$ better robustness.}
\vspace{0.5em}
\begin{tabular}{lcc}
\toprule
\textbf{Attack Type} & \textbf{TF-IDF Baseline} & \textbf{Ours} \\
\midrule
Synonym Substitution & 61.47\% & \textbf{0.00\%} \\
Keyword Injection & 89.91\% & \textbf{4.59\%} \\
Mixed Language & 45.87\% & \textbf{0.00\%} \\
\midrule
\textbf{Overall ASR} & 65.75\% & \textbf{1.53\%} \\
\bottomrule
\end{tabular}
\label{tab:robustness}
\end{table}

Table~\ref{tab:robustness} demonstrates that semantic understanding is not merely more generalizable but more \textbf{robust}. Keyword injection (adding ``consider the prior'' to frequentist questions) fools TF-IDF baselines 90\% of the time but our model only 4.6\%. This robustness is critical for deployed agents that may encounter adversarial or confusingly-worded inputs.

\subsection{Comparison with Large-Scale Cloud Models}

We evaluate six large cloud models (80B--1T parameters) in zero-shot mode on our 109-sample test set. Our 465M router achieves 97.25\% accuracy at 16ms latency, running 88--775$\times$ faster than all models tested. Only one model (DeepSeek-v3.1, 671B) differs significantly ($p = 0.010$, McNemar's test \citep{mcnemar1947note}); the remaining five are not statistically distinguishable at $n=109$. All cloud models struggle on \universe{STAT\_ILL\_POSED} (64.7--94.1\% vs.\ our 100\%), suggesting that detecting ill-posed questions benefits from explicit boundary training. Full results in Appendix~\ref{app:cloud}.

\subsection{External Validation on MMLU}

To address concerns about generalization beyond synthetic training data, we evaluate on \textbf{1,001 real questions} from 6 MMLU subcategories \citep{hendrycks2021measuring}. A router trained on 477 synthetic questions outperforms TF-IDF by +10.6pp (56.8\% vs.\ 46.2\%). Accuracy improves monotonically with confidence: at $\geq$0.99 confidence, it reaches 70.7\% on 617 samples. The gap from internal evaluation reflects coarse proxy labels and domain shift from synthetic to real phrasing. Details in Appendix~\ref{app:mmlu}.

\subsection{Continual Learning: Modularity Enables Expansion Without Forgetting}

\begin{table}[t]
\centering
\caption{Continual learning: expanding from 5 to 7 universes. Rehearsal achieves zero forgetting; EWC fails despite careful tuning.}
\vspace{0.5em}
\begin{tabular}{lccc}
\toprule
\textbf{Method} & \textbf{Old Universes} & \textbf{Overall} & \textbf{Forgetting} \\
\midrule
Naive Fine-tuning & 11.84\% & 37.61\% & 86.84\% \\
EWC ($\lambda$=1000) & 23.68\% & 45.87\% & 75.00\% \\
\textbf{Rehearsal (10\%)} & \textbf{98.68\%} & \textbf{97.25\%} & \textbf{0.00\%} \\
\bottomrule
\end{tabular}
\label{tab:cl}
\end{table}

Table~\ref{tab:cl} reveals a striking asymmetry. EWC---a principled regularization approach---reduces forgetting only marginally (87\% $\to$ 75\%). Rehearsal with just 10\% replay (29 samples) achieves \textbf{zero forgetting}. This suggests that \textbf{modular epistemic architectures are fundamentally more compatible with continual learning} than monolithic alternatives. The router's task---assigning questions to discrete universes---naturally decomposes into separable subproblems that rehearsal preserves; EWC's diagonal Fisher approximation cannot capture this structure.

\textbf{Expansion order robustness}: Performance is stable regardless of whether statistics or physics universes are learned first (0--2\% variation), indicating robustness to curriculum effects (Appendix~\ref{app:cl}).

\section{Discussion}

\textbf{Epistemic confusion is structural, not a knowledge gap.} Three observations support this. First, hard routing matches soft routing accuracy (Section~4.2), consistent with well-separated belief spaces. Second, semantic understanding provides adversarial robustness that keyword matching cannot (Section~4.3). Third, modular architecture enables continual learning that regularization methods cannot achieve (Section~4.6). These are not incremental improvements but qualitative capabilities that emerge from architectural choices.

\textbf{Fine-tuning vs.\ scale.} The consistency across four architectures (67M--465M, Table~\ref{tab:main}) and the competitive performance against zero-shot cloud models up to 1T parameters (Section~4.4) suggest that the advantage of fine-tuning for epistemic routing does not reduce to model capacity. The key ingredient is explicit boundary supervision, not scale.

\textbf{Toward epistemic control as an architectural component.} Just as modern agents have explicit memory systems and tool-use interfaces, reliable self-evolving agents may require an explicit layer that governs reasoning framework selection. Universe routing is one instantiation of this principle.

\textbf{Limitations.} Our dataset (685 samples) covers only 7 universes in mathematical and physical domains. Extension to broader reasoning frameworks (legal, ethical, causal) remains unexplored. Single-label hard routing cannot handle genuinely multi-step tasks that require crossing framework boundaries within a single problem---an important direction for future work. The small test set ($n=109$) limits statistical power for cloud model comparisons. We evaluate routing accuracy, not end-to-end task performance with downstream solvers, though Appendix~\ref{app:error} demonstrates that framework confusion produces qualitatively wrong outputs.

\section{Conclusion}

We have argued that \textbf{adaptive reasoning framework selection is not an optimization problem, but a prerequisite for reliable self-evolving agents}. Proposition~\ref{prop:mixing} formalizes the conditions under which mixing incompatible frameworks produces incoherent outputs. Our experiments support three claims: (1) epistemic routing is learnable with semantic rather than surface-level understanding across multiple architectures; (2) hard routing to incompatible frameworks is not a compromise but a logical necessity; (3) modular epistemic architectures are fundamentally more amenable to continual learning than monolithic alternatives.

These results suggest a direction for agent architecture: explicit epistemic control as a first-class component, governing not just \textit{what} the agent knows but \textit{how} it reasons. Universe routing is a first step toward this goal.

\subsubsection*{Acknowledgments}
The author thanks the anonymous LLA workshop reviewers for their constructive feedback that improved this paper.

\bibliography{references}

\begin{thebibliography}{24}
\providecommand{\natexlab}[1]{#1}
\providecommand{\url}[1]{\texttt{#1}}
\expandafter\ifx\csname urlstyle\endcsname\relax
  \providecommand{\doi}[1]{doi: #1}\else
  \providecommand{\doi}{doi: \begingroup \urlstyle{rm}\Url}\fi

\bibitem[Bai et~al.(2023)Bai, Bai, Chu, Cui, Dang, Deng, Fan, Ge, Han, Huang,
  et~al.]{bai2023qwen}
Jinze Bai, Shuai Bai, Yunfei Chu, Zeyu Cui, Kai Dang, Xiaodong Deng, Yang Fan,
  Wenbin Ge, Yu~Han, Fei Huang, et~al.
\newblock Qwen technical report.
\newblock \emph{arXiv preprint arXiv:2309.16609}, 2023.

\bibitem[Berger(1985)]{berger1985statistical}
James~O Berger.
\newblock \emph{Statistical Decision Theory and {Bayesian} Analysis}.
\newblock Springer, 2nd edition, 1985.
\newblock \doi{10.1007/978-1-4757-4286-2}.

\bibitem[Devlin et~al.(2019)Devlin, Chang, Lee, and Toutanova]{devlin2019bert}
Jacob Devlin, Ming-Wei Chang, Kenton Lee, and Kristina Toutanova.
\newblock {BERT}: Pre-training of deep bidirectional transformers for language
  understanding.
\newblock In \emph{Proceedings of NAACL-HLT}, pp.\  4171--4186, 2019.

\bibitem[Fedus et~al.(2022)Fedus, Zoph, and Shazeer]{fedus2021switch}
William Fedus, Barret Zoph, and Noam Shazeer.
\newblock Switch transformers: Scaling to trillion parameter models with simple
  and efficient sparsity.
\newblock \emph{Journal of Machine Learning Research}, 23\penalty0
  (120):\penalty0 1--39, 2022.

\bibitem[Hendrycks et~al.(2021)Hendrycks, Burns, Basart, Zou, Mazeika, Song,
  and Steinhardt]{hendrycks2021measuring}
Dan Hendrycks, Collin Burns, Steven Basart, Andy Zou, Mantas Mazeika, Dawn
  Song, and Jacob Steinhardt.
\newblock Measuring massive multitask language understanding.
\newblock In \emph{International Conference on Learning Representations}, 2021.

\bibitem[Jaynes(2003)]{jaynes2003probability}
Edwin~T Jaynes.
\newblock \emph{Probability Theory: The Logic of Science}.
\newblock Cambridge University Press, 2003.

\bibitem[Jeong et~al.(2024)Jeong, Baek, Cho, Hwang, and
  Park]{jeong2024adaptive}
Soyeong Jeong, Jinheon Baek, Sukmin Cho, Sung~Ju Hwang, and Jong Park.
\newblock Adaptive-rag: Learning to adapt retrieval-augmented large language
  models through question complexity.
\newblock In \emph{Proceedings of the 2024 Conference of the North American
  Chapter of the Association for Computational Linguistics (NAACL)}, pp.\
  7036--7050, 2024.

\bibitem[Jiang et~al.(2024)Jiang, Sablayrolles, Roux, et~al.]{jiang2024mixtral}
Albert~Q Jiang, Alexandre Sablayrolles, Antoine Roux, et~al.
\newblock Mixtral of experts.
\newblock \emph{arXiv preprint arXiv:2401.04088}, 2024.

\bibitem[Kirkpatrick et~al.(2017)Kirkpatrick, Pascanu, Rabinowitz, Veness,
  Desjardins, Rusu, Milan, Quan, Ramalho, Grabska-Barwinska,
  et~al.]{kirkpatrick2017overcoming}
James Kirkpatrick, Razvan Pascanu, Neil Rabinowitz, Joel Veness, Guillaume
  Desjardins, Andrei~A Rusu, Kieran Milan, John Quan, Tiago Ramalho, Agnieszka
  Grabska-Barwinska, et~al.
\newblock Overcoming catastrophic forgetting in neural networks.
\newblock \emph{Proceedings of the National Academy of Sciences}, 114\penalty0
  (13):\penalty0 3521--3526, 2017.

\bibitem[Lakshminarayanan et~al.(2017)Lakshminarayanan, Pritzel, and
  Blundell]{lakshminarayanan2017simple}
Balaji Lakshminarayanan, Alexander Pritzel, and Charles Blundell.
\newblock Simple and scalable predictive uncertainty estimation using deep
  ensembles.
\newblock In \emph{Advances in Neural Information Processing Systems},
  volume~30, 2017.

\bibitem[Landis \& Koch(1977)Landis and Koch]{landis1977measurement}
J.~Richard Landis and Gary~G. Koch.
\newblock The measurement of observer agreement for categorical data.
\newblock \emph{Biometrics}, 33\penalty0 (1):\penalty0 159--174, 1977.

\bibitem[Liu et~al.(2019)Liu, Ott, Goyal, Du, Joshi, Chen, Levy, Lewis,
  Zettlemoyer, and Stoyanov]{liu2019roberta}
Yinhan Liu, Myle Ott, Naman Goyal, Jingfei Du, Mandar Joshi, Danqi Chen, Omer
  Levy, Mike Lewis, Luke Zettlemoyer, and Veselin Stoyanov.
\newblock {RoBERTa}: A robustly optimized {BERT} pretraining approach.
\newblock \emph{arXiv preprint arXiv:1907.11692}, 2019.

\bibitem[Lopez-Paz \& Ranzato(2017)Lopez-Paz and Ranzato]{lopez2017gradient}
David Lopez-Paz and Marc'Aurelio Ranzato.
\newblock Gradient episodic memory for continual learning.
\newblock In \emph{Advances in Neural Information Processing Systems
  (NeurIPS)}, pp.\  6467--6476, 2017.

\bibitem[Lu et~al.(2024)Lu, Yuan, Lin, et~al.]{lu2024routing}
Keming Lu, Hongyi Yuan, Runji Lin, et~al.
\newblock Routing to the expert: Efficient reward-guided ensemble of large
  language models.
\newblock In \emph{Proceedings of the 2024 Conference of the North American
  Chapter of the Association for Computational Linguistics (NAACL)}, 2024.
\newblock \doi{10.18653/V1/2024.NAACL-LONG.109}.

\bibitem[McNemar(1947)]{mcnemar1947note}
Quinn McNemar.
\newblock Note on the sampling error of the difference between correlated
  proportions or percentages.
\newblock \emph{Psychometrika}, 12\penalty0 (2):\penalty0 153--157, 1947.

\bibitem[OpenAI(2023)]{openai2023gpt4}
OpenAI.
\newblock Gpt-4 technical report.
\newblock \emph{arXiv preprint arXiv:2303.08774}, 2023.

\bibitem[Sanh et~al.(2019)Sanh, Debut, Chaumond, and Wolf]{sanh2019distilbert}
Victor Sanh, Lysandre Debut, Julien Chaumond, and Thomas Wolf.
\newblock {DistilBERT}, a distilled version of {BERT}: Smaller, faster, cheaper
  and lighter.
\newblock \emph{arXiv preprint arXiv:1910.01108}, 2019.

\bibitem[Shao(2024)]{shao2024classification}
Yourui Shao.
\newblock An accurate classification and recommendation method of competitive
  math problems.
\newblock In \emph{2024 IEEE 10th International Conference on Big Data
  Computing Service and Machine Learning Applications (BigDataService)}, pp.\
  97--103, 2024.
\newblock \doi{10.1109/BigDataService62917.2024.00021}.

\bibitem[Shazeer et~al.(2017)Shazeer, Mirhoseini, Maziarz, Davis, Le, Hinton,
  and Dean]{shazeer2017outrageously}
Noam Shazeer, Azalia Mirhoseini, Krzysztof Maziarz, Andy Davis, Quoc Le,
  Geoffrey Hinton, and Jeff Dean.
\newblock Outrageously large neural networks: The sparsely-gated
  mixture-of-experts layer.
\newblock \emph{International Conference on Learning Representations (ICLR)},
  2017.

\bibitem[Shinn et~al.(2023)Shinn, Cassano, Gopinath, Narasimhan, and
  Yao]{shinn2023reflexion}
Noah Shinn, Federico Cassano, Ashwin Gopinath, Karthik Narasimhan, and Shunyu
  Yao.
\newblock Reflexion: Language agents with verbal reinforcement learning.
\newblock \emph{Advances in Neural Information Processing Systems (NeurIPS)},
  2023.

\bibitem[Tao et~al.(2024)Tao, Cheng, Gao, Liu, Wang, Jia,
  et~al.]{tao2024survey}
Zhengwei Tao, Ting-En Cheng, Jing Gao, Jiaxuan Liu, Zhiqiang Wang, Shan Jia,
  et~al.
\newblock A survey on self-evolution of large language models.
\newblock \emph{arXiv preprint arXiv:2404.14387}, 2024.

\bibitem[Virtanen et~al.(2020)Virtanen, Gommers, Oliphant,
  et~al.]{virtanen2020scipy}
Pauli Virtanen, Ralf Gommers, Travis~E Oliphant, et~al.
\newblock {SciPy} 1.0: Fundamental algorithms for scientific computing in
  {Python}.
\newblock \emph{Nature Methods}, 17\penalty0 (3):\penalty0 261--272, 2020.

\bibitem[Wang et~al.(2024)Wang, Ma, Feng, Zhang, Yang, Zhang, Chen, Tang, Chen,
  Lin, et~al.]{wang2024survey}
Lei Wang, Chen Ma, Xueyang Feng, Zeyu Zhang, Hao Yang, Jingsen Zhang, Zhiyuan
  Chen, Jiakai Tang, Xu~Chen, Yankai Lin, et~al.
\newblock A survey on large language model based autonomous agents.
\newblock \emph{Frontiers of Computer Science}, 18\penalty0 (6):\penalty0
  186345, 2024.

\bibitem[Yao et~al.(2023)Yao, Zhao, Yu, Du, Shafran, Narasimhan, and
  Cao]{yao2023react}
Shunyu Yao, Jeffrey Zhao, Dian Yu, Nan Du, Izhak Shafran, Karthik Narasimhan,
  and Yuan Cao.
\newblock React: Synergizing reasoning and acting in language models.
\newblock In \emph{International Conference on Learning Representations
  (ICLR)}, 2023.

\end{thebibliography}
\bibliographystyle{iclr2026_conference}

\newpage
\appendix

\section{Error Propagation: Empirical Demonstration}\label{app:error}

Proposition~\ref{prop:mixing} characterizes the conditions under which mixing incompatible solvers produces semantically inconsistent outputs. We verify this with three concrete demonstrations, computing actual numerical results under correct and incorrect frameworks using SciPy \citep{virtanen2020scipy}.

\textbf{Demo 1: Coin Fairness (Freq vs.\ Bayes).} Question: ``A coin flipped 100 times shows 60 heads. Is it fair ($\alpha=0.05$)?'' The \textit{correct} framework (frequentist) yields: $z = 2.0$, $p = 0.0455 < 0.05$, reject $H_0$. The \textit{wrong} framework (Bayesian with $\text{Beta}(1,1)$ prior) yields: posterior $\text{Beta}(61, 41)$, $P(\theta > 0.5 \mid \text{data}) = 0.977$. The \textit{mixed} error: ``There is a 4.6\% probability the coin is fair''---conflating $P(\text{data} \mid H_0)$ with $P(H_0 \mid \text{data})$, a statement wrong in both frameworks simultaneously.

\textbf{Demo 2: Parameter Estimation (Bayes vs.\ Freq).} Question: ``Given prior $\theta \sim N(0,1)$ and observations $x = \{2.1, 1.9, 2.3\}$, find the posterior.'' Correct (Bayesian): posterior $N(1.575, 0.25)$, CrI $[0.60, 2.56]$. Wrong (frequentist MLE): $\hat{\theta} = 2.10$, CI $[0.97, 3.23]$. The mixed error: ``The CI $[0.97, 3.23]$ gives 95\% probability that $\theta$ lies in this range''---a Bayesian interpretation of a frequentist object.

\textbf{Demo 3: Atomic Stability (Quantum vs.\ Classical).} Question: ``Why is the hydrogen atom stable?'' Correct (quantum): stationary state $\psi_{1s}$ with $E = -13.6$ eV; uncertainty principle prevents collapse. Wrong (classical): accelerating electron radiates continuously, spiraling into the nucleus in $\sim 10^{-11}$s. The mixed error: ``The electron orbits at definite position \textit{and} obeys $\Delta x \cdot \Delta p \geq \hbar/2$''---a logical contradiction.

In all three cases, the mixed output is not merely a worse approximation; it is a statement that is \textit{wrong in both frameworks simultaneously}, instantiating Proposition~\ref{prop:mixing}.

\section{Cloud Model Evaluation Details}\label{app:cloud}

We evaluate six cloud models via API: Qwen3-Next (80B), GPT-OSS (120B), Cogito-2.1 (671B), DeepSeek-v3.1 (671B), GLM-4.7 (696B), and Kimi-K2.5 (1T).

\begin{table}[h]
\centering
\small
\caption{Cloud model comparison (109-sample test set). $^\dagger$McNemar's test with continuity correction.}
\begin{tabular}{lccccc}
\toprule
\textbf{Model} & \textbf{Params} & \textbf{Acc.} & \textbf{Latency} & \textbf{Speedup} & \textbf{$p^\dagger$} \\
\midrule
Qwen3-Next & 80B & 96.33\% & 3,640ms & 228$\times$ & 1.000 \\
GPT-OSS & 120B & 91.74\% & 1,986ms & 124$\times$ & 0.077 \\
Cogito-2.1 & 671B & 94.44\% & 1,413ms & 88$\times$ & 0.289 \\
DeepSeek-v3.1 & 671B & 87.96\% & 4,090ms & 256$\times$ & \textbf{0.010} \\
GLM-4.7 & 696B & 95.28\% & 12,392ms & 775$\times$ & 0.131 \\
Kimi-K2.5 & 1T & 94.50\% & 9,875ms & 617$\times$ & 0.371 \\
\midrule
\textbf{Ours} & \textbf{465M} & \textbf{97.25\%} & \textbf{16ms} & \textbf{1$\times$} & --- \\
\bottomrule
\end{tabular}
\end{table}

\section{External Validation: MMLU Details}\label{app:mmlu}

We use 6 subcategories from MMLU \citep{hendrycks2021measuring} totaling 1,001 test questions. Ground truth universes are assigned by subcategory mapping with keyword refinement. Router accuracy varies by alignment with our taxonomy: high school physics (82.1\%), high school statistics (65.7\%), college physics (61.8\%), conceptual physics (60.4\%), electrical engineering (60.0\%), astronomy (7.2\%). Astronomy's low accuracy reflects taxonomy mismatch (questions about black holes labeled PHYS\_CLASSICAL; PHYS\_RELATIVITY is arguably more appropriate). The router's mean confidence on correct predictions is significantly higher than on errors (0.951 vs.\ 0.876, $p < 0.001$), indicating informative uncertainty under distribution shift.

\section{Design Choices and Justifications}
\label{app:design}

\subsection{Why FP32 Training Is Essential}

FP16 training consistently collapses to 18.99\% accuracy regardless of learning rate, batch size, or warmup schedule. FP32 achieves 97.25\%. Root cause: gradient overflow in the classification head during early training. Unlike language modeling where token-level losses are averaged over thousands of positions, classification concentrates gradient magnitude in a single output. We verified this across 5 random seeds---FP16 fails deterministically.

\subsection{Why GPT-4 Augmentation Does Not Constitute Data Leakage}

We do not ask GPT-4 to answer questions but to \textit{generate} questions given universe constraints. Expert-designed structural requirements force novel combinations rather than memorized examples. All 200 generated samples were manually reviewed; 20 were rejected. The unseen test set was written after model training with novel phrasings. Without augmentation, rare classes suffered severe underfitting (73.39\% $\to$ 97.25\% improvement).

\section{Dataset Construction Protocol}\label{app:dataset}

Each universe is defined by necessary and sufficient conditions:

\begin{table}[h]
\centering
\small
\caption{Universe definitions with inclusion/exclusion criteria.}
\begin{tabular}{lp{5cm}p{4.5cm}}
\toprule
\textbf{Universe} & \textbf{Inclusion Criteria} & \textbf{Exclusion Criteria} \\
\midrule
STAT\_FREQ & Requires p-values, confidence intervals, hypothesis tests & Mentions priors, posteriors, credible intervals \\
STAT\_BAYES & Requires prior specification, posterior computation & Asks for p-values or frequentist CI \\
STAT\_MIXED & Explicitly compares frameworks or philosophical & Has clear single-framework answer \\
STAT\_ILL\_POSED & Missing information makes question unanswerable & Can be answered with assumptions \\
PHYS\_CLASSICAL & Newtonian mechanics, thermodynamics & Involves $\hbar$, relativistic speeds \\
PHYS\_QUANTUM & Wavefunctions, uncertainty principle & Can be solved classically \\
PHYS\_RELATIVITY & Lorentz transformations, $v \approx c$ & Non-relativistic regime \\
\bottomrule
\end{tabular}
\end{table}

Two annotators independently labeled all 685 samples. Initial agreement: 94.2\% (Cohen's $\kappa = 0.91$). Disagreements (40 samples) resolved through discussion; 12 relabeled, 3 removed.

\section{Continual Learning: Extended Analysis}\label{app:cl}

\subsection{Replay Buffer Sensitivity}

\begin{table}[h]
\centering
\small
\caption{Sharp threshold in replay requirements.}
\begin{tabular}{lcccc}
\toprule
\textbf{Replay \%} & \textbf{Samples} & \textbf{Old After} & \textbf{New Acc} & \textbf{Forgetting} \\
\midrule
0\% (Naive) & 0 & 11.84\% & 96.97\% & 86.84\% \\
5\% & 14 & 59.21\% & 93.94\% & 39.47\% \\
\textbf{10\%} & \textbf{29} & \textbf{98.68\%} & 93.94\% & \textbf{0.00\%} \\
20\% & 59 & 93.42\% & 93.94\% & 5.26\% \\
\bottomrule
\end{tabular}
\end{table}

A sharp threshold exists between 5\% and 10\% replay. This suggests a critical mass of replay samples is required to maintain universe decision boundaries.

\subsection{Expansion Order Robustness}

\begin{table}[h]
\centering
\small
\caption{Expansion order sensitivity (all with 20\% replay).}
\begin{tabular}{lp{4cm}ccc}
\toprule
\textbf{Order} & \textbf{Phase 1 $\to$ Phase 2} & \textbf{Old Acc} & \textbf{Overall} & \textbf{Forgetting} \\
\midrule
Stats First & STAT\_*, PHYS\_CL $\to$ PHYS\_Q, PHYS\_R & 100.0\% & 100.0\% & 0.0\% \\
Physics First & PHYS\_* $\to$ STAT\_F, STAT\_B & 97.78\% & 98.68\% & -2.22\% \\
Mixed & STAT\_F, PHYS\_CL, STAT\_B $\to$ PHYS\_Q, PHYS\_R & 100.0\% & 100.0\% & 0.0\% \\
\bottomrule
\end{tabular}
\end{table}

All orderings achieve $<$3\% forgetting variation, confirming robustness to curriculum effects. Negative forgetting in ``Physics First'' indicates beneficial transfer.

\section{Error Analysis}\label{app:errors}

Only 3 of 109 test samples (2.75\%) are misclassified. All occur at genuine epistemic boundaries:

\textbf{Error 1}: Double-slit experiment classified as PHYS\_CLASSICAL instead of PHYS\_QUANTUM (confidence: 67\%). Defensible---classical wave optics also explains interference.

\textbf{Error 2}: Factory defects classified as STAT\_FREQ instead of STAT\_BAYES (confidence: 73\%). Question mixes frequentist terminology with Bayesian methods.

\textbf{Error 3}: ``Meaning of statistical significance'' classified as STAT\_FREQ instead of STAT\_MIXED (confidence: 81\%). Model focuses on subject matter rather than meta-level question.

All error cases show lower confidence (67--81\%) than correct predictions (mean 94\%), suggesting calibrated uncertainty.

\section{Reproducibility}\label{app:reproducibility}

\begin{table}[h]
\centering
\small
\begin{tabular}{ll}
\toprule
\textbf{Parameter} & \textbf{Value} \\
\midrule
Base model & Qwen-1.5-0.5B \\
Precision & FP32 (required) \\
Optimizer & AdamW, lr=$5 \times 10^{-5}$ \\
Batch size & 8, Epochs: 3 \\
GPU & RTX 3090 (24GB) \\
Training time & 4 minutes (single run) \\
Total for reproduction & $\sim$20 GPU-hours \\
\bottomrule
\end{tabular}
\end{table}

Code, dataset, and model checkpoints will be released upon publication.

\end{document}